\newtheorem{defn}{Definition}[section]
\newtheorem{pro}{Proposition}[section]
\newtheorem{thm}{Theorem}[section]
\newcommand{\Prob}{\mathbb{P}}
\DeclarePairedDelimiter{\sbk}{[}{]}
\DeclarePairedDelimiter{\rbk}{(}{)}
\DeclarePairedDelimiter{\cbk}{\{}{\}}
\DeclarePairedDelimiter{\vbk}{|}{|}
\newcommand{\wtv}{\texttt{word2vec}\xspace}
\newcommand{\weekdays}{\texttt{Weekdays}\xspace}
\title{Word2vec Conjecture and A Limitative Result}
\author{%
  Falcon Z.~Dai \\
  Toyota Technological Institute at Chicago \\
  Chicago, USA \\
  \texttt{dai@ttic.edu}
}
\date{}
\begin{document}
\maketitle
\begin{abstract}
Being inspired by the success of \texttt{word2vec} \citep{mikolov2013distributed} in capturing analogies, we study the conjecture that analogical relations can be represented by vector spaces.
Unlike many previous works that focus on the distributional semantic aspect of \texttt{word2vec}, we study the purely \emph{representational} question: can \emph{all} semantic word-word relations be represented by differences (or directions) of vectors?
We call this the word2vec conjecture and point out some of its desirable implications.
However, we will exhibit a class of relations that cannot be represented in this way, thus falsifying the conjecture and establishing a limitative result for the representability of semantic relations by vector spaces over fields of characteristic 0, e.g., real or complex numbers.

\end{abstract}

\section{Introduction}
\label{sec:introduction}

\citet{mikolov2013distributed} have inspired a wave of excitement in natural language processing (NLP) research---much of which is still being felt today---that relies on some notion of distributional semantics \cite{firth1957synopsis} and emphasizes learning from large corpora. As the plain yet memorable name of its codebase \wtv suggests, \citet{mikolov2013distributed} describes a learning-based method to represent words with vectors (in a finite-dimensional inner product vector space).
Furthermore, perhaps most surprisingly, \citet{mikolov2013distributed} discovered that \emph{many} analogies in the form of ``$a$ is to $b$ as $c$ is to $d$'' are captured by the arithmetics in the representation vector space.
Many excellent subsequent works \citep{Pennington_Socher_Manning_2014,Levy_Goldberg_2014b,arora2016latent} have studied this empirical phenomenon and provide some theoretical explanations for the emergence of semantic analogies from a training objective based on estimating distribution of context words.
We, on the other hand, study a purely representational question inspired by, and generalized from the original observation. Can we represent words as vectors and all of their analogical relations as vector differences? This question is purely representational because we are only concerned with if such representation is possible ignoring how to obtain them from corpus statistics. This is a generalization because we ask if this is true for \emph{all} analogical relations. We call this question the word2vec conjecture to signify its origin and representations by vector spaces, but we do not suggest that the original work \cite{mikolov2013distributed} claimed or relied on the conjecture being true.

At the core of the word2vec conjecture, it is a concern whether the vector space structure suffices to capture the structure of semantic relations.
Perhaps not surprisingly to some readers, representing relations as vectors has its limitations.
We will focus on a concrete limitation when trying to represent group-like relations with vectors: the relations can loop around but vectors cannot due to linearity.
We will use, as a simple but illustrative example, the successor relation over weekdays, i.e., ``$a$ is the weekday immediately after $b$'', and the relations generated by it, e.g., ``$a$ is the second weekday after $b$'' (see Section~\ref{sec:weekdays}).
These relations form a cyclic group of order 7 as weekdays repeat themselves every seven days and, in particular, the successor relation is of order 7.
However no nonzero vector over a field of characteristic 0 is of order 7, thus forcing all the relations to be represented as the zero vector (of order 1).

Despite the negative results, this work is primarily motivated by the positive goal of improving algorithmic natural language understanding.
Accordingly, we will first motivate and formulate the problem of representing relations (Section~\ref{sec:nlu}). Then we formalize the vector representation of relations and pose the word2vec conjecture (Section~\ref{sec:conjecture}). We will discuss some desirable implications of the conjecture. Then we will show a limitative result by exhibiting loopy semantic relations that cannot be represented by vector spaces (Section~\ref{sec:order-cyclic}).
Lastly we provide an extended discussion (Section~\ref{sec:discussion}).
In the interest of self-containedness, we review the relevant algebraic concepts in the appendix (Appendix~\ref{app:group-theory} and \ref{app:field-vs}).

Our main contributions are two-fold.
\begin{itemize}
  \item We show that some semantic relations cannot be represented by vector spaces, thus showing a fundamental limitation of \wtv.
  \item By using algebraic arguments to establish the limitative result, we demonstrate the utility of algebraic arguments in the study of semantic representations.
\end{itemize}

\section{Related work}
\label{sec:related_work}

Prior to \texttt{word2vec}, \citet{Turney_Littman_2005, turney2010frequency} studied representing words as vectors and relations as matrices. This is not a relation-as-vector representation and, in fact, can represent relations with a group structure (see Section~\ref{sec:solutions}).
The conjecture that we propose and answer is similar in spirit to their concerns on the limitations of representations (see Section 8 \citep{turney2010frequency}).
One major difference between the learning method of \citet{turney2010frequency} and that of \citet{mikolov2013distributed} is characterized as the count vs. prediction issue and is studied extensively \citep{baroni2014don,Pennington_Socher_Manning_2014}.
Basing on the connection to the ratio of conditional distributions noted by \citet{mikolov2013distributed} and by extension, to pointwise mutual information (PMI) \citep{church1990word}, \citet{Levy_Goldberg_2014b} studied building vector representation of relations explicitly. This work provides a limitative result for such efforts, learning or not. We will specialize our arguments to the ratio of conditional distributions, a common characterization of \wtv, to convey more concrete intuitions (see Section~\ref{sec:pmi}).
\citet{arora2016latent} studied rigorously the consequence of prediction under the choice of small dimensionality and provided additional insight on learning word-vector representations by extending a theoretical model of word generation proposed by \citet{mnih2007three}.
We note that due to the similarity measures used by \citet{mikolov2013distributed,arora2016latent}, relations are represented as directions and lines, respectively, instead of vectors. A conscious simplification is made to focus on vectors with the hope to convey the main ideas more smoothly (see Section~\ref{sec:revisit}).

\section{Formalism}
\label{sec:formalism}

\subsection{Words and relations}
\label{sec:nlu}

Short of a persuasive formal definition of linguistic understanding, we will posit a weak consequence.

{\bf Postulate.} If someone \emph{understands} a collection of words, then he or she can correctly answer questions about relations over those words.

These relations can mirror physical relations which is of interest in grounding research but they can arise from other sources. More importantly, regardless of the sources of these relations, we can study the \emph{structure} of them in abstract. Ultimately it is the structure of relations that we wish to adequately represent in some mathematical objects for manipulation and computation. As an example, consider the weekdays. If someone claims to understand the weekdays, then we expect them to correctly answer questions such as ``Does Tuesday follow Monday?'' and ``Is Sunday the second weekday after Friday?''.

{\bf Problem setting.} Formally, given a set of words (concepts) $\mathcal{W}$, and a set of binary relations $\mathcal{R}$ where $r \in \mathcal{R}$ is a subset of $\mathcal{W} \times \mathcal{W}$.\footnote{We will focus on binary relations for the rest of the article and specialize relations to mean binary relations.}
We want to find a representation $\rbk*{\mathcal{W}, \mathcal{R}, \mathcal{E}, \varphi}$
such that $\varphi : \mathcal{R} \rightarrow \mathcal{E}$ nontrivially preserves the structure of $\mathcal{R}$, i.e., a nontrivial \emph{homomorphism}.

\subsection{Example of \weekdays}
\label{sec:weekdays}
Let $\mathcal{W} \coloneqq \cbk*{\text{``Monday''}, \text{``Tuesday''}, \cdots, \text{``Sunday''}}$. Consider the usual successor relation $s$ over the weekdays, e.g., $\rbk*{\text{``Monday''}, \text{``Tuesday''}} \in s$ and $\rbk*{\text{``Monday''}, \text{``Sunday''}} \notin s$.
Consider a relation composition operation $\circ$ that constructs a relation out of two relations. Let $r \circ r' \coloneqq \cbk*{ \rbk*{w, w'} : \exists w''. (w, w'') \in r \text{ and } (w'', w') \in r' }$.
The successor relation generates other relations by composing with itself, e.g., $s^2 = s \circ s$ which intuitively says whether $w'$ is the second weekday after $w$. In this fashion, we can generate relation $s^k = \underbrace{s\circ\cdots\circ s}_{k \text{ times}}$ where $\rbk*{w, w'} \in s^k$ iff $w'$ is the $k$-th weekday after $w$.
However, as weekdays repeat themselves every seven days, we have $s^k = s^\ell$ iff $k \equiv \ell \pmod 7$.
Hence collecting all \emph{powers} of $s$ results in a set $\mathcal{R} \coloneqq \cbk*{ s, s^2, \cdots, s^7 }$, a set of seven binary relations.
In fact, $\mathcal{R}$ is a cyclic group of order 7 (see Appendix~\ref{app:proof-weekdays-z7} for a proof).
\begin{pro}
  \label{pro:weekdays-z7}
  $\mathcal{R} = \cbk*{s, s^2, \cdots, s^7}$ together with composition $\circ$ forms a group isomorphic to $\mathbb{Z}_7$.
\end{pro}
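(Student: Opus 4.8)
The plan is to identify $\mathcal{R}$ with the cyclic subgroup of the symmetric group on $\mathcal{W}$ generated by the successor permutation, and then read off the isomorphism type. First I would observe that the successor relation $s$ is in fact the graph of a bijection $\sigma : \mathcal{W} \to \mathcal{W}$: every weekday has a unique immediate successor and a unique immediate predecessor, so $\sigma \in \operatorname{Sym}(\mathcal{W})$. Under this identification, relation composition $\circ$ restricted to graphs of functions agrees with ordinary function composition, and the identity relation $\Delta \coloneqq \cbk*{(w,w) : w \in \mathcal{W}}$ is the graph of $\mathrm{id}_{\mathcal{W}}$; hence $s^k$ is the graph of $\sigma^k$ for every $k \ge 1$.

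Next I would pin down the cycle structure of $\sigma$. Because the weekdays form a single $7$-periodic cycle under succession, $\sigma$ is a single $7$-cycle, so its order in $\operatorname{Sym}(\mathcal{W})$ is exactly $7$: $\sigma^7 = \mathrm{id}_{\mathcal{W}}$ and $\sigma^k \ne \mathrm{id}_{\mathcal{W}}$ for $1 \le k \le 6$ (equivalently $s^k = s^\ell$ iff $k \equiv \ell \pmod 7$, as already noted in Section~\ref{sec:weekdays}). Consequently the powers $s^1, \dots, s^7$ are pairwise distinct, $s^7 = \Delta$, and $\mathcal{R} = \cbk*{s^k : k \ge 1} = \abk*{\sigma}$ is precisely the cyclic subgroup generated by $\sigma$. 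A group of prime order $7$ is necessarily cyclic and isomorphic to $\mathbb{Z}_7$, which already gives the claim; I would nonetheless also spell out the homomorphism for concreteness.

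It then remains to check the group axioms directly and exhibit the isomorphism. Closure follows from $s^k \circ s^\ell = s^{k+\ell}$ together with $s^7 = \Delta$, reducing the exponent modulo $7$ back into $\cbk*{1,\dots,7}$; associativity is inherited from associativity of relation composition; $\Delta = s^7$ is the identity; and $s^{7-k}$ (resp.\ $s^7$) inverts $s^k$ for $1 \le k \le 6$ (resp.\ $k=7$). Finally, define $\varphi : \mathbb{Z}_7 \to \mathcal{R}$ by $\varphi(k \bmod 7) \coloneqq s^{k}$ with the convention $s^0 \coloneqq s^7 = \Delta$; this is well defined and bijective because $s^k = s^\ell$ iff $k \equiv \ell \pmod 7$, and it is a homomorphism since $\varphi\rbk*{(k+\ell) \bmod 7} = s^{k+\ell} = s^k \circ s^\ell = \varphi(k \bmod 7) \circ \varphi(\ell \bmod 7)$. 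Hence $\rbk*{\mathcal{R}, \circ} \cong \mathbb{Z}_7$.

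I expect no genuine obstacle here; the only point needing care is the first step, namely verifying that $\circ$ on functional relations coincides with function composition and that $s$ really is (the graph of) a bijection on $\mathcal{W}$, since everything afterwards ($\sigma$ being a single $7$-cycle, a $7$-cycle having order $7$, a group of prime order being $\cong \mathbb{Z}_7$) is standard once that identification is in place. One can also bypass symmetric groups entirely and just verify the four axioms on $\mathcal{R}$ as above; the sole non-routine input is again $s^7 = \Delta$ with $s^1,\dots,s^6 \ne \Delta$, which is exactly the $7$-periodicity of the calendar.
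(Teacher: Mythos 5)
Your proof is correct, and its main route differs from the paper's. The paper works entirely at the level of relations: it verifies associativity by unwinding the definition of $\circ$ on triples, checks that $s^7$ is the identity relation and that $s^k = s^\ell$ iff $k \equiv \ell \pmod 7$, produces inverses $s^{7-k}$, and then writes down the explicit isomorphism $s^k \mapsto k \pmod 7$. You instead first observe that $s$ is the graph of a permutation $\sigma \in \operatorname{Sym}(\mathcal{W})$ and that $\circ$ on graphs of functions is function composition, so that $\mathcal{R} = \abk*{\sigma}$ is the cyclic subgroup generated by a $7$-cycle, hence cyclic of order $7$ and isomorphic to $\mathbb{Z}_7$; your second paragraph then redoes the direct axiom check and exhibits the isomorphism, which essentially reproduces the paper's argument, so that part is redundant given the first. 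The permutation-group route is more conceptual and gets associativity, closure, and inverses for free from standard facts about cyclic subgroups, at the cost of one identification that deserves care: with the paper's convention $r \circ r' = \cbk*{(w,w') : \exists w''.\ (w,w'')\in r,\ (w'',w')\in r'}$, the graph correspondence reverses the order of composition (graph of $f$ composed with graph of $g$ is the graph of $g\circ f$), so it is an anti-homomorphism in general; this is harmless here because only powers of the single element $s$ are involved, but it is worth stating explicitly. The paper's low-level verification, by contrast, needs no auxiliary structure and makes every step checkable from the definition of $\circ$ alone.
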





\subsection{Relation-as-vector representations}
\label{sec:rel2vec}

Suppose $(\mathcal{W}, \mathcal{R}, \mathbb{V}, \varphi)$ embeds words to a vector space over the reals $\mathbb{V}$ via the embedding map $\varphi : \mathcal{W} \rightarrow \mathbb{V}$.
Motivated by the empirical observation of emergence of relations as differences of word-vectors \cite{mikolov2013distributed}, we call $(\mathcal{W}, \mathcal{R}, \mathbb{V}, \varphi)$ a \emph{relation-as-vector representation} if for any relation $r \in \mathcal{R}$, we have $\varphi(b) - \varphi(a) = \varphi(d) - \varphi(c)$ for any $a, b, c, d \in \mathcal{W}$ satisfying $(a, b) \in r$ and $(c, d) \in r$.
With a slight abuse of notations, we will denote this relation-vector as $\varphi(r) = \varphi(b) - \varphi(a) \in \mathbb{V}$.\footnote{$\varphi$ is a category theoretic functor in disguise.}
Lastly, we say that a set of relations $\mathcal{R}$ is \emph{well-represented} by $(\mathbb{V}, \mathcal{R}, \varphi)$ if $\varphi(r) \neq \varphi(s)$ for all $r \neq s \in \mathcal{R}$, i.e., we can distinguish distinct relations.

\subsection{Word2vec conjecture}
\label{sec:conjecture}
We ask whether relation-as-vector representation can universally represent any relations.

{\bf Conjecture.} Can \emph{all} binary relations over words be well-represented by some relation-as-vector representation?

\subsection{An implication for metalinguistics}
Besides the potential in applications such as solving analogical queries \citep{mikolov2013distributed}, the recursiveness of representations of both words and relations might inspire interesting questions that are metalinguistic in nature. For example, if our conjecture is true---relations are representable as vectors---, then how about higher-order relations, i.e., relations between relations? Can they all be represented in the same way, i.e., as vectors? It is precisely this possibility that initiated our study of the current problem.

\section{Results}
\label{sec:results}
We will answer the conjecture in the negative in two steps. The first is about algebraic structures, resulting in a general negative result (Theorem~\ref{thm:negative}), and the second is an empirical observation about the existence of certain class of semantic relations.

\subsection{Algebraic aspects of semantics}
Note that by the definition of relation-as-vector representation, $\varphi$ is a homomorphism from $\rbk*{\mathcal{R}, \circ}$ to the abelian group $\rbk*{\mathbb{V}, +}$. Suppose $r, r' \in \mathcal{R}$ and $\varphi(r) = \varphi(c) - \varphi(b)$ and $\varphi(r') = \varphi(b) - \varphi(a)$ for some $a, b, c \in \mathcal{W}$, then $\varphi(r \circ r') = \varphi(c) - \varphi(b) + \varphi(b) - \varphi(a) = \varphi(r) + \varphi(r')$.

This key observation means that in a relation-as-vector representation, we cannot assign two arbitrary vectors to represent two relations if the two relations are compositionally related to each other. In particular, the \emph{identity relation}, i.e., $(w, w') \in e$ iff $w = w'$, has to be assigned the zero vector. And an inverse relation would have the negative relation-vector so they sum to the zero vector.

In order to prove that \emph{no} good representation exists, our strategy is to find that some algebraic invariants, which are preserved by homomorphisms, are not equal in $(\mathcal{R}, \circ)$ and in $(\mathbb{V}, +)$. Therefore no $\varphi$ can well represent the given relations.

\subsection{Orders and cyclic relations}
\label{sec:order-cyclic}
Consider the \emph{order} of a group element (see Appendix~\ref{app:group-theory}). It is a basic fact that the order of its homomorphic image in $H$ divides its order in $G$, i.e., $\vbk*{\varphi(r)}$ divides $\vbk*{r}$ (see Proposition~\ref{pro:order-hom}). This fact is useful because the only element in a vector space over real numbers with a finite order is the zero vector. It has order of one $\vbk*{\mathbf{0}} = 1$.

\begin{thm}
  \label{thm:negative}
  If the given relations contains a relation $r$ with a nontrivial finite order, i.e., $\vbk*{r} \ne 1$ and $\vbk*{r} < \infty$, then they cannot be well-represented by any relation-as-vector representation using a vector space over a field of characteristic 0, e.g., real or complex numbers.
\end{thm}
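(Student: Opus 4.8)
The plan is a short proof by contradiction that exploits the observation made just before the statement: $\varphi$ is a group homomorphism from $\rbk*{\mathcal{R}, \circ}$ to the abelian group $\rbk*{\mathbb{V}, +}$. Suppose, for contradiction, that some relation-as-vector representation $\rbk*{\mathcal{W}, \mathcal{R}, \mathbb{V}, \varphi}$ with $\mathbb{V}$ a vector space over a field $\mathbb{F}$ of characteristic $0$ well-represents $\mathcal{R}$, and let $r \in \mathcal{R}$ be a relation with $n \coloneqq \vbk*{r}$ satisfying $1 < n < \infty$.

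First I would push the finiteness of the order through $\varphi$: by Proposition~\ref{pro:order-hom}, $\vbk*{\varphi(r)}$ divides $\vbk*{r} = n$, so $\varphi(r)$ is an element of finite order in the additive group $\rbk*{\mathbb{V}, +}$. The key step is then to show that the only finite-order element of $\rbk*{\mathbb{V}, +}$ is $\mathbf{0}$, and this is precisely where characteristic $0$ enters. If $\varphi(r)$ has finite order, then $m\,\varphi(r) = \mathbf{0}$ for some positive integer $m$; but in a field of characteristic $0$ the image of $m$ is a nonzero scalar and hence invertible, so scaling by its inverse yields $\varphi(r) = \mathbf{0}$, i.e.\ $\vbk*{\varphi(r)} = 1$. (Over a field of characteristic $p > 0$ this step fails, since every nonzero vector has order $p$; this is why the hypothesis on the characteristic cannot be dropped.)

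Finally I would derive the contradiction with well-representedness. Since $\mathcal{R}$ is closed under $\circ$ and $r$ has finite order $n$, the identity relation $e = r^n$ lies in $\mathcal{R}$, and $\varphi(e) = \mathbf{0}$ because $\varphi$ is a homomorphism (equivalently $\varphi(e) = n\,\varphi(r) = \mathbf{0}$). Thus $\varphi(r) = \mathbf{0} = \varphi(e)$ while $r \ne e$ (as $n \ne 1$), contradicting the requirement that $\varphi$ separate distinct relations of $\mathcal{R}$. Hence no relation-as-vector representation over a characteristic-$0$ field can well-represent $\mathcal{R}$.

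The argument is essentially a two-line syllogism once Proposition~\ref{pro:order-hom} is in hand, so there is no serious obstacle; the only place that demands care is the characteristic-$0$ fact in the middle step, and — if one wishes to avoid invoking closure of $\mathcal{R}$ to get $e \in \mathcal{R}$ — observing that the cyclic subgroup generated by $r$ already contains two distinct relations (for instance $r$ and $e$, or $r$ and $r^2$) that $\varphi$ is forced to map to the same vector.
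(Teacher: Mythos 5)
Your proposal is correct and follows essentially the same route as the paper's own proof: Proposition~\ref{pro:order-hom} forces $\varphi(r)$ to have finite order, the characteristic-$0$ hypothesis (a nonzero integer scalar is invertible) forces $\varphi(r) = \mathbf{0}$, and then distinct powers of $r$ (e.g.\ $r$ and the identity relation) collapse to the same vector, contradicting well-representedness. Your explicit remarks on the failure in characteristic $p$ and on avoiding the assumption $e \in \mathcal{R}$ are welcome refinements but do not change the argument, which matches the paper's.
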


The key idea is that any homomorphism $\varphi$ would have to represent all powers of $r$ with the zero vector, thus failing to encode them distinctly (see Appendix~\ref{app:proof-negative} for a detailed proof).

Now, it remains to exhibit that there are semantic relations with a nontrivial finite order (whose powers forms cyclic groups). Conveniently, \weekdays provides such an example: the successor relation $s$ is of order 7. Similarly, many temporal concepts exhibit cyclic relations such as $\mathbb{Z}_{12}$ for the months, $\mathbb{Z}_{24}$ for the hours. An interesting non-temporal example are the antonyms, over which a parity relation is of order 2.

\subsection{Specialization to ratios of conditional distributions}
\label{sec:pmi}
Due to the continuous bag of words (CBOW) learning procedure used in \cite{mikolov2013distributed}, it is suggested that the word-vector of $w$ in \wtv is an approximation of the logarithm of the conditional distributions of context words around $w$. Suppose the space of context words is $\mathcal{C}$ and the probability of a context word $c \in \mathcal{C}$ appearing near $w$ is $\Prob\sbk*{c \vert w}$,
then the specific relation-as-vector representation suggested is $\rbk*{\mathcal{W}, \mathcal{R}, \mathbb{V}\rbk*{\mathcal{C}}, \psi}$
where $\mathbb{V}\rbk*{\mathcal{C}}$ is the \emph{free} real vector space generated by $\mathcal{C}$
and $\psi : \mathcal{W} \rightarrow \mathbb{V}\rbk*{\mathcal{C}} : w \mapsto \rbk*{c \mapsto \log \Prob\sbk*{c \vert w}}$.
Then a relation $r \in \mathcal{R}$ is represented by the logarithm of the ratio of conditional distributions over each context word
\begin{align*}
  \psi(r) &= \psi(w') - \psi(w) \\
  &= \rbk*{c \mapsto \log \Prob\sbk*{c \vert w'} - \log \Prob\sbk*{c \vert w}} \\
  &= \rbk*{c \mapsto \log \frac{\Prob\sbk*{c \vert w'}}{\Prob\sbk*{c \vert w}}}
\end{align*}
where $(w, w') \in r$.

Note that the general conclusion of Theorem~\ref{thm:negative} applies to this specific relation-as-vector representation: $\psi$ cannot well-represent a loopy relation.
However, we can specialize the analysis to $\psi$ in the hope of gaining more intuition.

Suppose over some context word $c$, some non-identity relation $r$ has a nonzero coordinate, i.e., $\frac{\Prob\sbk*{c \vert w'}}{\Prob\sbk*{c \vert w}} \ne 1$, otherwise $r$ is the identity relation and $\psi(r) = \mathbf{0}$.
Any positive power of $r$, $r^k$ for $k > 0$, would have a nonzero coordinate at $c$ as $\rbk*{\frac{\Prob\sbk*{c \vert w'}}{\Prob\sbk*{c \vert w}}}^k \ne 1$.
Therefore $\psi\rbk*{r^k} \ne \mathbf{0}$ and $\psi(r)$ cannot have a finite order.
Intuitively, the ratio of distributions of context words is monotonic with respect to the power and thus cannot wrap around.

\section{Discussion}
\label{sec:discussion}

\subsection{Other limitations and examples}
\label{sec:other-limits}
Another representational limitation of vector space, being an abelian group, is that it cannot well-represent non-commutative relations. For example, consider longitude-latitude locations on the globe and translational relations. Due to the spherical geometry, going eastward for 10 miles then going northward for 10 miles is not the same as going north first then east.

The general deficiency of the vector space structure to represent other algebraic structures described in this work applies to other ``X2vec'' methods (some outside of natural language processing).

\subsection{Relevance to learning}
\label{sec:learning}
Broadly speaking in learning problems, there is usually some target $x^*$ within some hypothesis class to be learned from data, and the learned hypothesis $\hat{x}$ is approximately correct due to the limited data or computation. In this perspective, what we show with Theorem~\ref{thm:negative} is that certain semantic relations does not correspond to any $x^*$ within relation-as-vector representations. Practically for natural language understanding applications, we have to consider whether the representation class (and algorithms) is structurally compatible with the class of semantic relations.

\subsection{Possible solutions}
\label{sec:solutions}
Based on Cayley's theorem and basic representation theory, invertible square matrices would well represent relations exhibiting group structures. Less generally, for cyclic relations exhibiting $\mathbb{Z}_k$ structure, we can represent the successor relation with the scalar multiplication by complex number $e^{2\pi i/ k}$. These solutions are not relation-as-vector representations.

\subsection{Revisiting \wtv}
\label{sec:revisit}
An expert reader might point out that due to the cosine similarity used in \wtv, a relation is represented by a direction (and in the case of \citet{arora2016latent}, a line due to squaring). However, this difference does not break our cyclic examples as directions or lines can be thought as quotient vector spaces where vectors of positive multiple of each other are identified, or those of nonzero multiple, respectively.

\subsection{Analogy and abstract algebra}

Analogies and metaphors are important linguistic phenomena that relate one domain of concepts and relations to another in a way that preserves (some of) its structure. Some researchers argue that it is a key mechanism in linguistic understanding \cite{Falkenhainer_Forbus_Gentner_1989,hofstadter1995copycat,Lakoff_Johnson_2008}. Technically one can model the mechanism of analogy as algebraic structure-preserving maps, i.e., homomorphisms.

This work suggests an algebraic theory of semantics would be helpful, perhaps necessary, for studying structures in semantic representations. We leave the problem of learning (discovering) group-like semantic relations to future work. Note that identifying two groups (known as the group isomorphism problem) are undecidable in general.

\section*{Acknowledgments}
We thank David McAllester for discussions on word2vec.
We thank Zheng Cai for discussions on linguistic analogy.
We thank David Yunis for reading an early version of this work.
Lastly, we thank an anonymous reviewer for constructive feedback and encouragement on an earlier version which led to an improved presentation.

\bibliographystyle{acl_natbib}
\bibliography{references}


\clearpage

\appendix
\section{Elementary group theory}
\label{app:group-theory}
\begin{defn}[Group]
  A \emph{group} is a set $G$ together with a binary operation (group multiplication) $\cdot : G \times G \rightarrow G$ that satisfies the following axioms. As a convention, we write $gh = g \cdot h$ when there is no ambiguity.
  \begin{itemize}
    \item (identity) There is an element $e \in G$ such that for any $x \in G$, $x e = x$. We denote the (right-)identity with $e$ as the left- and right-identity are the same and unique.
    \item (associativity) For any $x, y, z \in G$, we have $(x y) z = x (y z)$. So conveniently, we can write their product as $x y z$ without ambiguity.
    \item (inverses) For any $x \in G$, there is some $y \in G$ such that $x y = e$. We denote the (right-)inverse of $x$ as $x^{-1} = y$ as $y$ is unique and the left- and right-inverses are the same for $x$.
  \end{itemize}
\end{defn}

Similarly to a group, a \emph{monoid} might not have inverses, and a \emph{semigroup} might not have an identity or inverses. Group multiplication is generally not commutative, and we say a group is \emph{abelian} if its group multiplication is commutative, i.e., $a b = b a$. Furthermore, we commonly denote the group multiplication with addition to emphasize its commutativity, e.g., integers with the usual addition $(\mathbb{Z}, +)$ forms an abelian group.

The \emph{order} of a group element $x \in G$ is the smallest positive power of $x$ that is identity, denoted by $\vbk*{x} \coloneqq \min \cbk*{n \in \mathbb{Z}_{> 0} : x^n = \underbrace{x \cdots x}_{n\text{ times}} = e}$. If no positive power of $x$ is equal to $e$, then we say that the order of $x$ is infinite and denote $\vbk*{x} = \infty$.

\begin{defn}[Group homomorphism]
  Given two groups $(G, \cdot_G)$ and $(H, \cdot_H)$, we call a map $\varphi : G \rightarrow H$ a \emph{(group) homomorphism} if for any $a, b \in G$
  $$\varphi\rbk*{a \cdot_G b} = \varphi(a) \cdot_H \varphi(b).$$
\end{defn}
Furthermore, if $\varphi$ is bijective then we say $\varphi$ is an \emph{isomorphism} and $G$ is isomorphic to $G$, denoted by $G \cong H$.

\begin{pro}
  \label{pro:order-hom}
  Given two groups $G$ and $H$ and a homomorphism $\varphi : G \rightarrow H$ between them, for any $g \in G$, we have $\vbk*{\varphi\rbk*{g}}$ divides $\vbk*{g}$.
\end{pro}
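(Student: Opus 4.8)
The plan is to reduce the claim to the division algorithm on the positive integers, after recording two elementary consequences of the homomorphism axiom that will be needed. First I would check that $\varphi$ carries the identity to the identity: applying $\varphi$ to $e_G \cdot_G e_G = e_G$ gives $\varphi(e_G) \cdot_H \varphi(e_G) = \varphi(e_G)$, and multiplying by $\varphi(e_G)^{-1}$ in $H$ yields $\varphi(e_G) = e_H$. Then a one-line induction on $k \ge 0$ from the homomorphism property gives $\varphi(g^k) = \varphi(g)^k$ for every $g \in G$, the base case $k = 0$ being the previous observation.

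Next, fix $g \in G$. If $\vbk*{g} = \infty$ there is nothing to prove, so assume $n \coloneqq \vbk*{g} < \infty$, i.e. $g^n = e_G$ with $n$ least. Using the two facts above, $\varphi(g)^n = \varphi(g^n) = \varphi(e_G) = e_H$, so $\varphi(g)$ has finite order and $m \coloneqq \vbk*{\varphi(g)}$ is well-defined with $m \le n$. Now I would invoke the division algorithm: write $n = qm + r$ with $q \ge 0$ and $0 \le r < m$. Then
$$e_H = \varphi(g)^n = \varphi(g)^{qm+r} = \rbk*{\varphi(g)^m}^q \varphi(g)^r = \varphi(g)^r,$$
and since $0 \le r < m$ while $m$ is by definition the \emph{least} positive integer with $\varphi(g)^m = e_H$, we must have $r = 0$. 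Hence $m \mid n$, i.e. $\vbk*{\varphi(g)}$ divides $\vbk*{g}$.

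I do not expect a genuine obstacle here: the entire content is the division-algorithm step above — essentially the same computation that identifies the order of an element with the size of the cyclic subgroup it generates. The only points requiring a little care are the degenerate cases — $\vbk*{g} = \infty$, for which ``divides'' holds by the usual convention, and $g = e_G$, for which $\varphi(g) = e_H$ has order $1$ and trivially $1 \mid n$ — together with making sure the two preliminary identities $\varphi(e_G) = e_H$ and $\varphi(g^k) = \varphi(g)^k$ are in place before they are used.
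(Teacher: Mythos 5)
Your proof is correct and takes essentially the same route as the paper: establish $\varphi(e_G) = e_H$ and $\varphi(g)^n = \varphi(g^n) = e_H$ for $n = \vbk*{g}$, then conclude that $\vbk*{\varphi(g)}$ divides $n$. The only difference is that you spell out the division-algorithm step (and the degenerate cases) that the paper's one-line proof leaves implicit, which is a fine elaboration rather than a different argument.
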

\begin{proof}
  It is not hard to see that a homomorphism maps the identity in $G$ to the identity in $H$. Suppose $\vbk*{g} = n$ then $\varphi\rbk*{g^n} = \varphi\rbk*{e_G} = e_H = \varphi(g)^n$. This implies that $\vbk*{\varphi(g)}$ divides $n$.
\end{proof}

If $\varphi$ is an isomorphism, then $\vbk*{\varphi\rbk*{g}} = \vbk*{g}$.

\section{Fields and vector spaces}
\label{app:field-vs}
A \emph{field} $F$ is a set equipped with addition (with identity denoted by $0$) and multiplication (with identity denoted as $1$) and multiplicative inverses exist for nonzero elements. Furthermore, both addition and multiplication are commutative and they follow the distributive law. Common examples are the rationals $\mathbb{Q}$, the reals $\mathbb{R}$, the complex $\mathbb{C}$ and finite fields of prime order $\mathbb{Z}/p \mathbb{Z}$. Note that the integers $\mathbb{Z}$ is not a field due to the missing multiplicative inverses.

The \emph{characteristic} of a field $F$ is defined to be the smallest integer $k$ such that $\underbrace{1 + \cdots + 1}_{k \text{ times}} = 0$ and if no such $k$ exists, then we say that $F$ has characteristic $0$. For example, $\mathbb{R}$ has characteristic $0$, and $\mathbb{Z}/p \mathbb{Z}$ has characteristic $p$.

A \emph{vector space} $\mathbb{V}$ over a field $F$ (also called a \emph{left $F$-module}) consists of an abelian group $(\mathbb{V}, +)$ with the identity element denoted as the zero vector $\mathbf{0}$ and a \emph{scalar multiplication} $\cdot : F \times \mathbb{V} \rightarrow \mathbb{V}$ that plays nicely with addition and multiplication of $F$.

A \emph{free $F$-vector space} generated over a set $S$ is a $F$-vector space with $S$ as its basis. Thus its dimensionality is the size of $S$. Generally speaking, a \emph{free} object is a ``generic'' object containing its generating set that is free of any additional conditions and satisfying only those in the definition of that class of objects.


\section{Proof of Proposition~\ref{pro:weekdays-z7}}
\label{app:proof-weekdays-z7}
\begin{proof}
  First, we check that the relation composition operation $\circ$ is associative over the successor relation $s$.
  If $(w, w') \in (s \circ s) \circ s$, then by definition, there is some $u \in \mathcal{W}$ such that $(w, u) \in s \circ s$ and $(u, w') \in s$. Again by definition, there is some $v \in \mathcal{W}$ such that $(w, v) \in s$ and $(v, u) \in s$. This implies that $(v, w') \in s \circ s$ and $(w, w') \in s \circ (s \circ s)$. Similarly, we can show the reverse.
  Hence, $(w, w') \in (s \circ s) \circ s$ iff $(w, w') \in s \circ (s \circ s)$, and we have $(s \circ s) \circ s = s \circ (s \circ s)$. This justifies the notation for powers of $s$ as $s^k$ in Section~\ref{sec:weekdays}.

  Second, $s^7 \in \mathcal{R}$ is the identity relation, i.e., $(w, w') \in s^7$ iff $w = w'$ since weekdays repeat themselves every seven days. And for any binary relation $r$, $(w, w') \in r$ iff $(w, w') \in r \circ s^7$, so $r \circ s^7 = r$. Moreover, since $r = s^k$ for some $k$, we have $s^k = s^\ell$ iff $k \equiv \ell \pmod 7$. Accordingly, we can write the identity as $s^0 = s^7$.

  Third, we have inverses. For any $s^k$, choose $\ell$ so that $k + \ell \equiv 7 \pmod 7$, then $s^k \circ s^\ell = s^{k + \ell} = s^0$.

  Fourth, we show that \weekdays is isomorphic to $\mathbb{Z}_7 = (\mathbb{Z}/7\mathbb{Z}, +)$. Consider a map $\varphi : \weekdays \rightarrow \mathbb{Z}_7 : s^k \mapsto k \pmod 7$. This is a homomorphism as $\varphi\rbk*{s^k \circ s^\ell} = \varphi\rbk*{s^{k+\ell}} = k + \ell = \varphi\rbk*{s^k} + \varphi\rbk*{s^\ell}$.
  $\varphi$ is clearly injective, and surjective as $\mathbb{Z}_7$ also has 7 elements. Therefore $\varphi$ is an isomorphism.
\end{proof}

\section{Proof of Theorem~\ref{thm:negative}}
\label{app:proof-negative}
\begin{proof}
  First, $F$-vector space $\mathbb{V}$ with addition is an abelian group. The identity is the zero vector denoted by $\mathbf{0}$ whose order is $1$. If $F$ has characteristic $0$, then by definition, for any nonzero vector $\mathbf{v} \in \mathbb{V}$ and any $k > 0$ we have
  $$ (\underbrace{1 + \cdots + 1}_{k \text{ times}}) \mathbf{v} = \underbrace{\mathbf{v} + \cdots + \mathbf{v}}_{k \text{ times}} \ne \mathbf{0}.$$

  Thus any nonzero vector in $\mathbb{V}$ has order infinite. Together with Proposition~\ref{pro:order-hom}, any homomorphism $\varphi$ can only map a relation with nontrivial finite order to the zero vector, i.e., $\varphi(r) = \mathbf{0}$.
  But this implies that all powers are mapped to the zero vector
  $$ \varphi\rbk*{r^k} = \underbrace{ \varphi\rbk*{r} + \cdots + \varphi\rbk*{r} }_{k \text{ times}} = \underbrace{ \mathbf{0} + \cdots + \mathbf{0} }_{k \text{ times}} = \mathbf{0}. $$
  Therefore $\varphi$ do not distinguish the powers of $r$.
\end{proof}

\end{document}